\DeclareMathOperator{\divergence}{div}
\DeclareMathOperator{\grad}{grad}
\newcommand{\R}{\mathbb R}
\newtheorem{thm}{Theorem}
\newtheorem{prop}[thm]{Proposition}
\newcommand{\keywords}[1]{\par\addvspace\baselineskip
\noindent\keywordname\enspace\ignorespaces#1}
\begin{document}

\mainmatter  % start of an individual contribution

% first the title is needed
\title{Schr\"{o}dinger Diffusion for Shape Analysis with Texture}

% a short form should be given in case it is too long for the running head
\titlerunning{Schr\"{o}dinger Diffusion for Shape Analysis with Texture}

% the name(s) of the author(s) follow(s) next
%
% NB: Chinese authors should write their first names(s) in front of
% their surnames. This ensures that the names appear correctly in
% the running heads and the author index.
%
\author{Jose A. Iglesias \and Ron Kimmel}
\authorrunning{Jose A. Iglesias \and Ron Kimmel}
% (feature abused for this document to repeat the title also on left hand pages)

% the affiliations are given next; don't give your e-mail address
% unless you accept that it will be published
\institute{Department of Computer Science, Technion-Israel Institute of Technology,\\
Taub Building, Technion, Haifa 32000, Israel}

%
% NB: a more complex sample for affiliations and the mapping to the
% corresponding authors can be found in the file "llncs.dem"
% (search for the string "\mainmatter" where a contribution starts).
% "llncs.dem" accompanies the document class "llncs.cls".
%

\toctitle{Lecture Notes in Computer Science}
\maketitle

\begin{abstract}
In recent years, quantities derived from the heat equation have become popular in shape processing and analysis of triangulated surfaces. Such measures are often robust with respect to different kinds of perturbations, including near-isometries, topological noise and partialities. Here, we propose to exploit the semigroup of a Schr\"{o}dinger operator in order to deal with texture data, while maintaining the desirable properties of the heat kernel. We define a family of Schr\"{o}dinger diffusion distances analogous to the ones associated to the heat kernels, and show that they are continuous under perturbations of the data. As an application, we introduce a method for retrieval of textured shapes through comparison of Schr\"{o}dinger diffusion distance histograms with the earth's mover distance, and present some numerical experiments showing superior performance compared to an analogous method that ignores the texture.
\keywords{Laplace-Beltrami operator, textured shape retrieval, diffusion distance, Schr\"{o}dinger operators, earth mover's distance}
\end{abstract}

\section{Introduction}

There is an ever growing quantity of 3D shapes available, either scanned from real objects, manually modelled by artists, or acquired from other sources. Adequately classifying them, and being able to find similar and dissimilar models is therefore increasingly important, and automatic solutions are needed for the goal of efficient computerized \textit{shape retrieval}.

For a retrieval method to be useful, given the variability of shapes, often some invariance properties are required. The most obvious one is translation and rotation (that is, Euclidean) invariance. Their scale is often arbitrary, so sometimes it is also interesting to enforce invariance with respect to global or local scaling. More challenging is recognition in classes of non-rigid shapes, like shapes representing human faces, animals or animated characters. In these cases, only the intrinsic geometry can be used, thus enforcing invariance to isometries, and robustness with respect to near-isometries.

Often, geometric models include textures, which are an integral part of the representation. It is therefore natural to try and use the texture information to better distinguish between objects, for example in cases like separating between a horse and a zebra, classifying different species of fish with similar shapes but different colors and patterns, or categorizing archaeological findings. 

In this paper, we introduce a representation that incorporates texture data within several recent methods of shape analysis and retrieval, which themselves depend only on intrinsic geometry and hence are appropriate for non-rigid shapes. Our method fully inherits the desirable invariance and robustness properties of these methods, while also utilizing the texture of the shapes.

The paper is organized as follows. In Section 2 we briefly review previous efforts and basic concepts on which our method is based. In Section 3, we define our central quantities, a family of diffusion distances based on diffusion with Schr\"{o}dinger operators incorporating the texture data, and present some theoretical results about them. Then, in Section 4, we present a system of shape retrieval based on comparison of histograms of Schr\"{o}dinger diffusion distances with the earth mover's distance. Finally, in Section 5, we present some experimental results obtained with our representation model.

\section{Diffusion in Shape Analysis. Previous Works}

Adding to a long history of use of Laplace operators in geometry processing applications \cite{lap-mesh-pro}, the spectral decomposition of Laplace-Beltrami operators on surfaces has proven to be useful for tasks of shape analysis and comparison \cite{gps}.

In \cite{sog-hks}, diffusion through the heat equation, constructed from the spectral decomposition, was used for comparison of shapes through the introduction of the heat kernel signature (HKS). Since then, many methods have used descriptors for shapes built from the heat kernel \cite{shape-google}\cite{sihks}.

Diffusion distances were introduced by Coifman and Lafon in \cite{coifman06} for data analysis of point clouds, under the basic assumption that the sampled points come from an underlying low-dimensional manifold. Recently, diffusion distances have also received considerable attention for shape analysis and retrieval tasks, for example in shape recognition \cite{shapespec} or shape matching \cite{gh-diff}.

Recently, an approach to shape retrieval including texture data was proposed in \cite{artiom}, introducing three channels of texture (in the Lab color space) through a higher dimensional embedding, similar to the Beltrami framework \cite{sochkim}. In comparison, the method presented here supports a single channel for the texture, yet it has a clear interpretation in terms of diffusion on the original shape. It also requires lower order derivatives of the texture (at most one in our case, versus two for the embedding approach), making it less sensitive to noise.

\section{Schr\"{o}dinger Operators and Diffusion}

We will consider our surfaces to be compact two-dimensional manifolds, denoted by $M$ and embedded in $\R^3$, with triangular meshes as discretizations. In terms of a local parametrization and the corresponding first fundamental form $g$, one can define the Laplace-Beltrami operator through the formula
\begin{equation}\Delta_g f=(\divergence_g \circ \grad_g)(f)=\sum_{i,j}\frac{1}{\sqrt{|g|}}\partial_i\left(\sqrt{|g|} g^{ij}\partial_j f\right),\end{equation}
where $|g|$ is the determinant of the metric, $g^{ij}$ are the components of the inverse of the metric, and $\partial_i$ denotes partial derivative with respect to the $i$-th coordinate. It is well known \cite{jost2008riemannian} that the Laplace-Beltrami operator doesn't depend on the coordinate functions chosen, and since it's defined in terms of $g$, it is invariant under transformations that preserve $g$, that is, isometries. This operator is a generalization of the standard Laplacian in $\R^n$, for many of the processes associated to the former, like diffusion and smoothing.

Consider a function $V:M\rightarrow \R$, which we require to be bounded, but without needing any further regularity, in particular not necessarily differentiable or even continuous, to which we will refer as the \textit{potential}. A Sch\"{o}dinger operator on the surface $M$ is an operator of the form $\Delta_g-V$, with $V$ being considered as a multiplication operator, that is, $Vf(x)=V(x)f(x)$. 

These operators are referred to as Schr\"{o}dinger operators, and play a major role in quantum mechanics, where the study of their spectrum is key to understanding the Schr\"{o}dinger equation. In those cases the potential is usually unbounded, which makes the analysis a challenge of its own. One can also consider the diffusion equation associated to these operators,
\begin{equation}\label{sdiff}\begin{cases}
 \partial_t u(x,t)=\Delta_g u(x,t)-V(x)u(x,t)\\
  u(x,0)=u_0(x),\end{cases} \end{equation}
which will be the equation defining the quantities that we will use in what follows. Boundary conditions are not needed since $M$ is compact. The following result asserts that for these operators, on the continuous level, everything works as expected, mimicking the situation with the Laplacian and heat kernels:
\begin{thm}\label{webgood}Let $(M,g)$ be a compact Riemannian manifold of class $C^2$, $\Delta_g$ the Laplace-Beltrami operator on $(M,g)$, and $V\in L^{\infty}(M,\mu_g)$, where $\mu_g$ is the measure associated to the Riemannian volume element, with $V\geq0$. Then, the operator $-\Delta_g+V$ admits a spectral decomposition $\{(\phi_j, \lambda_j)\}_{j=1}^{\infty}$, such that $\{\phi_j\}_{j=1}^{\infty}$ is an orthonormal basis for $L^2(M,\mu_g)$, $\lambda_j \geq 0$ and $\underset{j\rightarrow \infty}{\lim} \lambda_j=+\infty$.

Moreover, there exists a family of functions $h_t \in L^2(M,\mu_g)$, such that for all $u_0 \in L^2(M,\mu_g)$, the unique solution of (\ref{sdiff}) is given by
\begin{equation}u(x,t)=\int_S h_t(x,y)u_0(y)d\mu_g(y),\end{equation}
and the following formula holds
\begin{equation}\label{krnl}h_t(x,y)=\sum_{i=1}^{\infty}e^{-\lambda_i t}\phi_i(x)\phi_i(y).\end{equation}
\end{thm}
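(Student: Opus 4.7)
The plan is to deduce both claims from standard facts about semibounded self-adjoint operators with compact resolvent on a compact manifold. First I would verify that $H := -\Delta_g + V$ is a non-negative self-adjoint operator on $L^2(M,\mu_g)$. On the compact $C^2$ manifold $M$, the operator $-\Delta_g$ is essentially self-adjoint on $C^\infty(M)$ and its self-adjoint closure has domain $H^2(M)$, with associated quadratic form $\mathcal{E}_0(f) = \int_M |\grad_g f|^2 \, d\mu_g$ of domain $H^1(M)$. Since $V\in L^\infty(M,\mu_g)$ acts as a bounded symmetric multiplication operator, the Kato--Rellich theorem yields self-adjointness of $H$ on the same domain $H^2(M)$, and non-negativity $H\geq 0$ follows immediately from
\begin{equation*}
\langle Hf,f\rangle = \mathcal{E}_0(f) + \int_M V|f|^2\,d\mu_g \geq 0,
\end{equation*}
using $V\geq 0$.

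Second, I would establish discreteness of the spectrum. The Rellich--Kondrachov theorem on compact manifolds gives a compact embedding $H^1(M)\hookrightarrow L^2(M)$, which implies that the resolvent $(H+I)^{-1}$, factoring through this embedding, is compact on $L^2(M,\mu_g)$. The spectral theorem for self-adjoint operators with compact resolvent then supplies an orthonormal eigenbasis $\{\phi_j\}_{j=1}^\infty$ with real eigenvalues $\lambda_j\geq 0$ accumulating only at $+\infty$.

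Third, for the kernel representation, functional calculus yields the analytic semigroup $e^{-tH}$, which acts on $L^2$ by
\begin{equation*}
e^{-tH}u_0 = \sum_{j=1}^\infty e^{-\lambda_j t}\langle u_0,\phi_j\rangle\,\phi_j
\end{equation*}
and solves (\ref{sdiff}) in the classical sense for $t>0$. To realize this as integration against the kernel (\ref{krnl}), I would use elliptic regularity applied to $-\Delta_g\phi_j = (\lambda_j-V)\phi_j$ (bootstrapping since $V\in L^\infty$) to conclude $\phi_j\in L^\infty(M)$, combine this with the Weyl asymptotics $\lambda_j\sim cj$ on a two-dimensional compact manifold, and invoke standard Hörmander-type sup-norm bounds on eigenfunctions. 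Together these ensure that the series in (\ref{krnl}) converges absolutely and uniformly on $M\times M$ for every fixed $t>0$, after which Fubini justifies interchanging sum and integral to obtain $u(x,t) = \int_M h_t(x,y)u_0(y)\,d\mu_g(y)$.

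The main obstacle is this last convergence step: turning the abstract $L^2$-level semigroup into a bona fide integral operator with a continuous kernel requires the combination of eigenfunction regularity, pointwise eigenfunction bounds, and Weyl's law. The rest of the argument is a routine application of spectral theory once self-adjointness and compactness of the resolvent are in hand. A more elementary alternative would be to construct $h_t$ directly via the Trotter product formula from the heat kernel of $\Delta_g$ (known to be smooth and positive) and the bounded multiplication $e^{-tV}$, which avoids Weyl-type estimates but demands some additional care with the Feynman--Kac representation.
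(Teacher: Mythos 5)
Your proposal is correct and follows essentially the same route as the paper, which simply defers to the standard argument for the Laplacian case (citing Grigor'yan's book) and only remarks that compactness of $M$ and boundedness of $V$ are what keep the resolvent compact. Your write-up fills in exactly those standard steps (Kato--Rellich self-adjointness, Rellich--Kondrachov compactness of the resolvent, the spectral theorem, and eigenfunction bounds plus Weyl's law for the uniform convergence of the kernel series), so there is no substantive difference in approach.
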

\begin{proof}The proof is essentially the same as in the Laplacian case. We refer to \cite{grigoryan} for a standard proof. Let us note, however, that both the hypotheses that the manifold is compact and the potential $V$ is bounded are essential, as otherwise, the discreteness of the spectrum is not guaranteed, since in those cases the involved resolvents could fail to be compact.
\end{proof}

Based on this result, a squared diffusion distance \cite{coifman06} can be defined as the $L^2$ norm of the difference of the kernels for Equation (\ref{sdiff}), and using Equation (\ref{krnl})
\begin{equation}d_t^2(x,y)=\left\|h_t(x,\cdot)-h_t(y,\cdot)\right\|^2_{L^2}=\sum_{j=1}^{\infty} e^{-2\lambda_j t} (\phi_j(x)-\phi_j(y))^2.\end{equation}
Note, that the sign in the exponential arises because we have defined $\lambda_j>0$ to be the eigenvalues of $-(\Delta_g-V)$. These distances enjoy the same properties as those associated to the heat kernel, since all the properties proved in \cite{coifman06}, including the fact that the formula above defines a distance, are valid for more general semigroups and not just the one associated to the Laplacian.

We also have that the solutions to the diffusion equation, and therefore any quantities derived from it, in particular our diffusion distances, are continuous with respect to perturbations of the potential. Namely,

\begin{prop}\label{stability}Consider the problem:
\begin{equation}\begin{cases}
 \partial_t u(x,t)=\Delta u(x,t)-(V(x)+\epsilon N(x))u(x,t)\\
  u(x,0)=u_0(x),\end{cases} \end{equation}
where $V,N\in L^{\infty}(M,\mu_g)$, $V\geq 0$ and $V+\epsilon N \geq 0$ for some $\epsilon \geq 0$. Then, the solutions to of the above problem converge strongly in $L^2(M,\mu_g)$ to the ones of problem \ref{sdiff} as $\epsilon \rightarrow 0$, for each fixed $t>0$.\end{prop}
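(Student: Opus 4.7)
The plan is to estimate the difference $w_\epsilon = u_\epsilon - u$ between the perturbed and unperturbed solutions via Duhamel's formula, exploiting the fact that Theorem \ref{webgood} provides a well-behaved semigroup in both cases.

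First, I would write down the equation satisfied by $w_\epsilon$. Subtracting the two diffusion equations gives
\begin{equation*}
\partial_t w_\epsilon = \Delta_g w_\epsilon - V w_\epsilon - \epsilon N u_\epsilon, \qquad w_\epsilon(\cdot,0)=0,
\end{equation*}
so $w_\epsilon$ solves the unperturbed Schrödinger diffusion equation with zero initial condition and source term $-\epsilon N u_\epsilon$. Denoting by $P_t = e^{t(\Delta_g-V)}$ the semigroup from Theorem \ref{webgood} (whose kernel is $h_t$), Duhamel's principle yields
\begin{equation*}
w_\epsilon(\cdot,t) = -\epsilon \int_0^t P_{t-s}\bigl(N u_\epsilon(\cdot,s)\bigr)\,ds.
\end{equation*}

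Next, I would establish two contractivity estimates. For any solution $v$ of an equation of the form $\partial_t v = \Delta_g v - W v$ with $W\geq 0$, the usual energy computation gives
\begin{equation*}
\tfrac{1}{2}\tfrac{d}{dt}\|v\|_{L^2}^2 = -\|\nabla_g v\|_{L^2}^2 - \int_M W v^2\,d\mu_g \leq 0,
\end{equation*}
so the associated semigroup is an $L^2$-contraction. Applied with $W=V+\epsilon N\geq 0$, this yields the uniform bound $\|u_\epsilon(\cdot,s)\|_{L^2}\leq \|u_0\|_{L^2}$ for all $s\geq 0$ and $\epsilon\geq 0$. Applied with $W=V\geq 0$, it gives $\|P_{t-s}f\|_{L^2}\leq \|f\|_{L^2}$.

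Combining these with the elementary bound $\|Nu_\epsilon(\cdot,s)\|_{L^2}\leq \|N\|_{L^\infty}\|u_\epsilon(\cdot,s)\|_{L^2}$, I obtain
\begin{equation*}
\|w_\epsilon(\cdot,t)\|_{L^2} \leq \epsilon \int_0^t \|N\|_{L^\infty}\|u_\epsilon(\cdot,s)\|_{L^2}\,ds \leq \epsilon\, t\, \|N\|_{L^\infty}\|u_0\|_{L^2},
\end{equation*}
which tends to $0$ as $\epsilon\to 0$ for each fixed $t$, giving the claim. The only non-routine step is justifying Duhamel's formula and the energy identity in this generality; however, since Theorem \ref{webgood} already supplies the spectral decomposition and strongly continuous semigroup on $L^2(M,\mu_g)$, both facts follow from standard semigroup theory, and I expect this to be the main (but manageable) technical point to spell out carefully.
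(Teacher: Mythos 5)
Your proof is correct, and it takes a genuinely different route from the paper. The paper invokes a Trotter--Kato type theorem (Kato IX.2.16): strong convergence of the semigroups is reduced to strong convergence of the resolvents $(-\Delta_g+V+\epsilon N+\xi)^{-1}f \to (-\Delta_g+V+\xi)^{-1}f$, which is then established by elliptic PDE arguments (a uniform $L^2$ bound on $u_\epsilon$, weak compactness in $H^2(M)$, passage to the limit in the weak formulation, and uniqueness for the limit equation). You instead work directly at the level of the semigroups: since $N\in L^\infty$ acts as a \emph{bounded} multiplication operator on $L^2(M,\mu_g)$, the perturbation $-\epsilon N$ falls under the bounded-perturbation/variation-of-constants theorem, so the Duhamel identity $w_\epsilon(\cdot,t)=-\epsilon\int_0^t P_{t-s}(Nu_\epsilon(\cdot,s))\,ds$ is standard, and the two contraction estimates (valid because $-\Delta_g+V$ and $-\Delta_g+V+\epsilon N$ are nonnegative self-adjoint operators, using $V\ge0$ and $V+\epsilon N\ge0$) close the argument. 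Your route buys strictly more than the statement asks for: the bound $\|u_\epsilon(\cdot,t)-u(\cdot,t)\|_{L^2}\le \epsilon\,t\,\|N\|_{L^\infty}\|u_0\|_{L^2}$ is uniform over $\|u_0\|_{L^2}\le 1$, so you get convergence in \emph{operator norm}, uniformly on compact time intervals, with an explicit $O(\epsilon)$ rate --- whereas the resolvent route as written only yields strong convergence (and the paper's closing claim that independence of $f$ upgrades this to operator-norm convergence is not justified as stated). The resolvent machinery would be the right tool if the perturbation were unbounded; for a bounded potential perturbation your direct argument is both more elementary and sharper. Two minor points to spell out if you write this in full: (i) the hypothesis $V+\epsilon_0 N\ge 0$ for some $\epsilon_0>0$ together with $V\ge0$ gives $V+\epsilon N\ge0$ for all $\epsilon\in[0,\epsilon_0]$ by convexity, which is what your uniform contraction bound on $u_\epsilon$ uses; (ii) continuity of $s\mapsto Nu_\epsilon(\cdot,s)$ in $L^2$, needed for the Bochner integral in Duhamel, follows from strong continuity of the perturbed semigroup plus boundedness of multiplication by $N$.
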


Proposition \ref{stability} can be proved by using standard results in perturbation theory of linear semigroups. We provide a proof in appendix A.

\subsection{The Feynman-Kac Formula}

To shed some light on the behavior of the solutions to Equation (\ref{sdiff}), we provide an informal discussion on a well-known stochastic interpretation of such solutions.

The Feynman-Kac formula \cite{simon2005functional} expresses the solution of a diffusion equation in terms of Brownian motion (strictly, the Wiener process on our space, $X$)
\begin{equation}\label{f-k}u(x,t)=E\left(u_0(X_0)\exp\left(-\int_0^t V(X_\tau) d\tau\right)\mid X_t=x\right),\end{equation}
the conditional expectation meaning that we take averages of all the Wiener paths that reach $x$ at time $t$, starting from elsewhere. Note that the integral inside the exponential involves the Wiener process itself, and hence needs to be understood in the sense of stochastic integrals. A rigorous treatment of this is beyond the scope of this paper, but can be found in \cite{simon2005functional}.

In the case of the heat equation, $V=0$, the initial values are transported over random paths, and the expected value over all paths that reach a point at a given time is the value of our solution. This kind of averaging property is the reason behind the robustness to different kinds of noise that diffusion distances and other quantities derived from heat kernels possess.

For Schr\"{o}dinger operators, one can think of this transported value being modulated exponentially by the potential $V(x)\geq0$, in a way consistent with what one gets by disregarding the diffusion term to end up with $u_t+Vu=0$. The transported values will be decreased according to how large $V$ is in the areas that the Brownian motion crosses, on average. Figure \ref{fig:pot_barr} illustrates this behavior.

\begin{figure}[ht]
\begin{center}\includegraphics[width=1.0\linewidth]{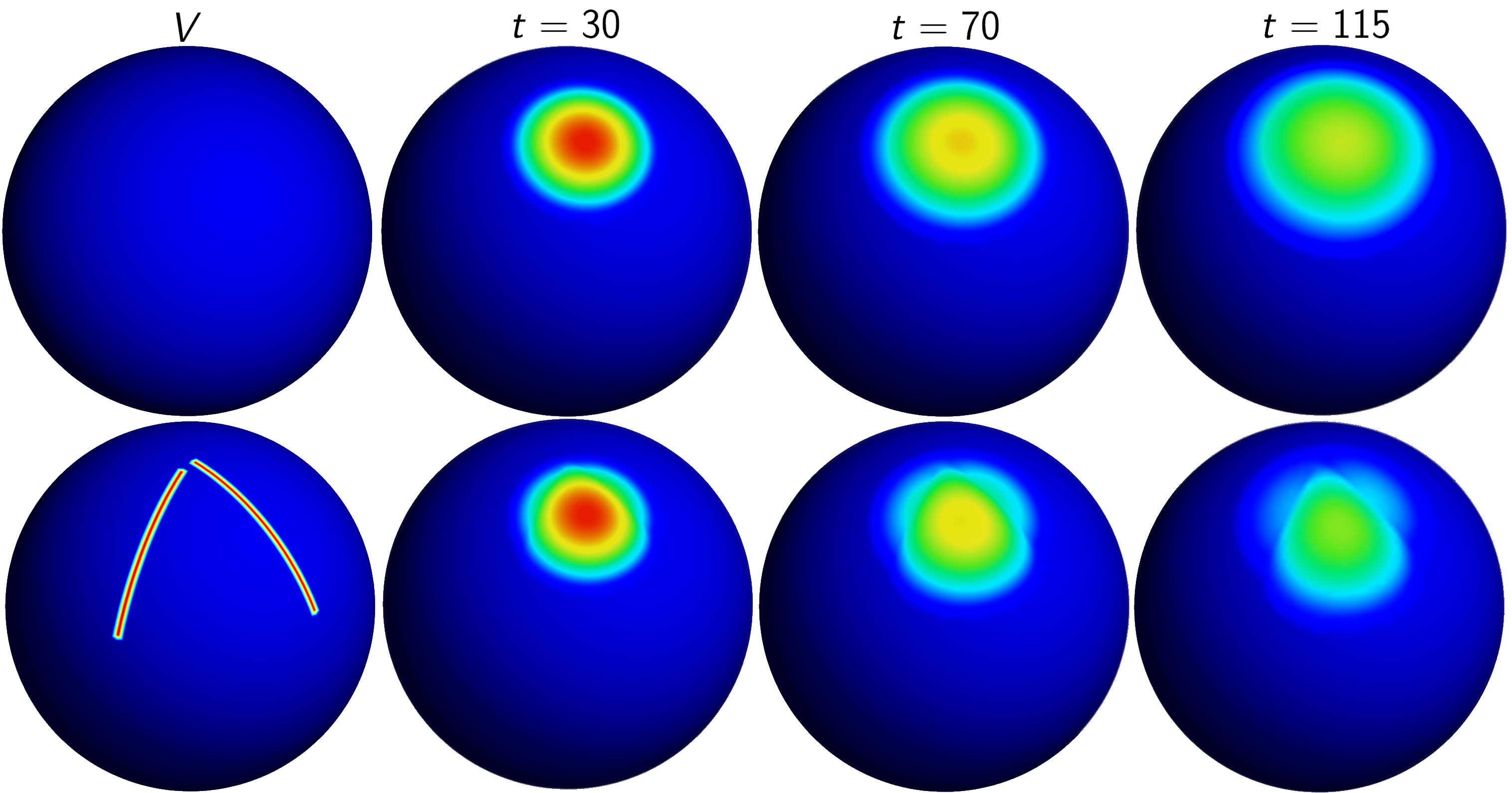}\end{center}
%\vspace{-15pt}
\caption{From left to right, row-wise: Potential and kernel $h_t(p,\cdot)$ at three different times.}\label{fig:pot_barr}
%\vspace{-5pt}
\end{figure}

\section{Textured Shape Retrieval with Schr\"{o}dinger Diffusion}

In our context, all references to textures on shapes will in fact be about vertex colorings. This is because only at that level the notions of a mapping on the surface and the diffusion can have discretizations consistent with one another, as required for our Schr\"{o}dinger operators. In the case of textures mapped on triangles, one could induce a vertex map by averaging over the Voronoi region corresponding to the vertex in its one-ring neighborhood, for example.

For our shape retrieval application, we intend to define a distance between signatures of the shapes, which in our case will be histograms of Schr\"{o}dinger diffusion distances between points. The retrieval method would then select from a database the shapes with the smallest distance to the query. In what follows, we assume our texture is a differentiable function $I:M\rightarrow \R$ in the continuous model, and a vertex function $I(v_i)$ for the discretized version.

\subsection{Operator Discretization}
A popular discretization for the Laplace-Beltrami operator of a surface is the so-called cotangent weight scheme \cite{pinkall-polthier}, where the weights are the sums of cotangents of angles adjacent to the edge, and the normalization coefficients are the Voronoi areas corresponding to the vertex. One of its disadvantages is the fact that for meshes with obtuse angles, the edge weights become negative, making it unsuitable \cite{nofreelunch} for simulating diffusion processes. 

So instead, we chose the `Mesh Laplacian' discretization of \cite{meshlap}, inspired by the one introduced by Belkin and Niyogi for data analysis of point clouds in \cite{lap-eigen}. For triangular meshes, it is given by
\begin{equation}\Delta_s f(v_i)=\frac{1}{4 \pi s^2}\sum_{\tau \in \mathcal{T}(v_i)}\frac{A(\tau)}{3}\sum_{w \in \tau}e^{\frac{-\|v_i-w\|^2}{4s}}(f(w)-f(v_i)),\end{equation}
where $\mathcal{T}(v_i)$ denotes the set of triangles in a neighborhood of $v_i$, $A(\tau)$ is the area of the triangle $\tau$ and $s$ is a scaling parameter. 

This discretization has several advantages. The ones that are the most useful for us are that the weights are nonnegative by definition, and the fact that it converges pointwise to the continuous Laplace-Beltrami operator, when the meshes approximate a smooth surface and $\mathcal{T}(v_i)$ is always the whole shape, as proved in \cite{meshlap}. Spectral convergence in a probabilistic sense, of the point-cloud version of the operator, is proved in \cite{belkConv}. In our case, since it would not be practical to use the whole shape as a neighborhood, we have taken the neighborhoods $\mathcal{T}(v_i)$ to be the one-ring neighborhood of each vertex. The parameter $s$ was choosen in a uniform way, not taking into account the size of the different neighborhoods, as one fifth of the median of the edge lengths over the whole shape.

After discretizing the Laplace-Beltrami operator as above, our discrete operators are defined in the obvious way,
\begin{equation}((\Delta_s -V)f)(v_i)=\Delta_s f(v_i)-V(v_i)f(v_i).\end{equation}

\subsection{Choice of Potential and its Discretization}

One choice for the potential $V$ would be just to take $V=I$, $I$ corresponding for example to the luminance of the texture. This would have the advantage of not having to explicitly compute any derivative of $I$, but in turn would make it depend on the reference taken for the texture, that is, on transformations of the kind $\tilde{I}=I+c$, where $c$ is a constant.

Another option is to use an edge indicator for the textures as the potential $V$, the most straightforward being the modulus of the surface gradient, $V=|\nabla I|$, $\nabla$ being the (Riemannian) gradient on the surface. Other options are $V=\log(1+I)$ or $V=\log(1+|\nabla I|)$, as a way to mitigate the exponential decay caused by the potential $V$ in Formula (\ref{f-k}). In our experiments below, we used this last potential.

Intuitively, from the random walk interpretation and the Feynman-Kac Formula (\ref{f-k}), we see that it will be harder to diffuse across edges of the texture, while in constant areas, the behavior will be that of the usual heat equation. This can be appreciated in Figure \ref{fig:diffexmp}.

We will now describe the approximation of the gradient of a function on the surface that we employed, which is the same as in \cite{sethian-vladimirsky}. Let $v$ be the vertex we are interested in, and $\tau=\{u,v,w\}$ a triangle of the shape having $v$ as a vertex. Denote
\begin{equation}r_u=\frac{u-v}{|u-v|},\:r_w=\frac{w-v}{|w-v|},\:P=\begin{bmatrix} r_u^T \\ r_w^T \end{bmatrix},\end{equation}
where $r_u^T, r_w^T$ denote transposes. Note, that $P$ is the change of basis matrix from the canonical basis to $r_u, r_w$, which form a basis for the plane that contains the triangle. Also, denote by $I_u,I_v,I_w$ the values of our function on the three vertices. Then, we can consistently approximate the norm of the gradient by
\begin{equation}\left(|\nabla I|_v^\tau\right)^2=\begin{pmatrix}\frac{I_u-I_v}{|u-v|} & \frac{I_w-I_v}{|w-v|}\end{pmatrix} (PP^T)^{-1} \begin{pmatrix}\frac{I_u-I_v}{|u-v|} \\ \frac{I_w-I_v}{|w-v|}\end{pmatrix}.\end{equation}
Finally, to obtain our discretization, we average this approximation over the one-ring neighborhood of $v$,
\begin{equation}|\nabla I|_v=\frac{1}{\#\mathcal{T}(v)}\sum_{\tau \in \mathcal{T}(v)}|\nabla I|_v^\tau.\end{equation}

\begin{figure}[t]
\begin{center}\includegraphics[width=0.9\linewidth]{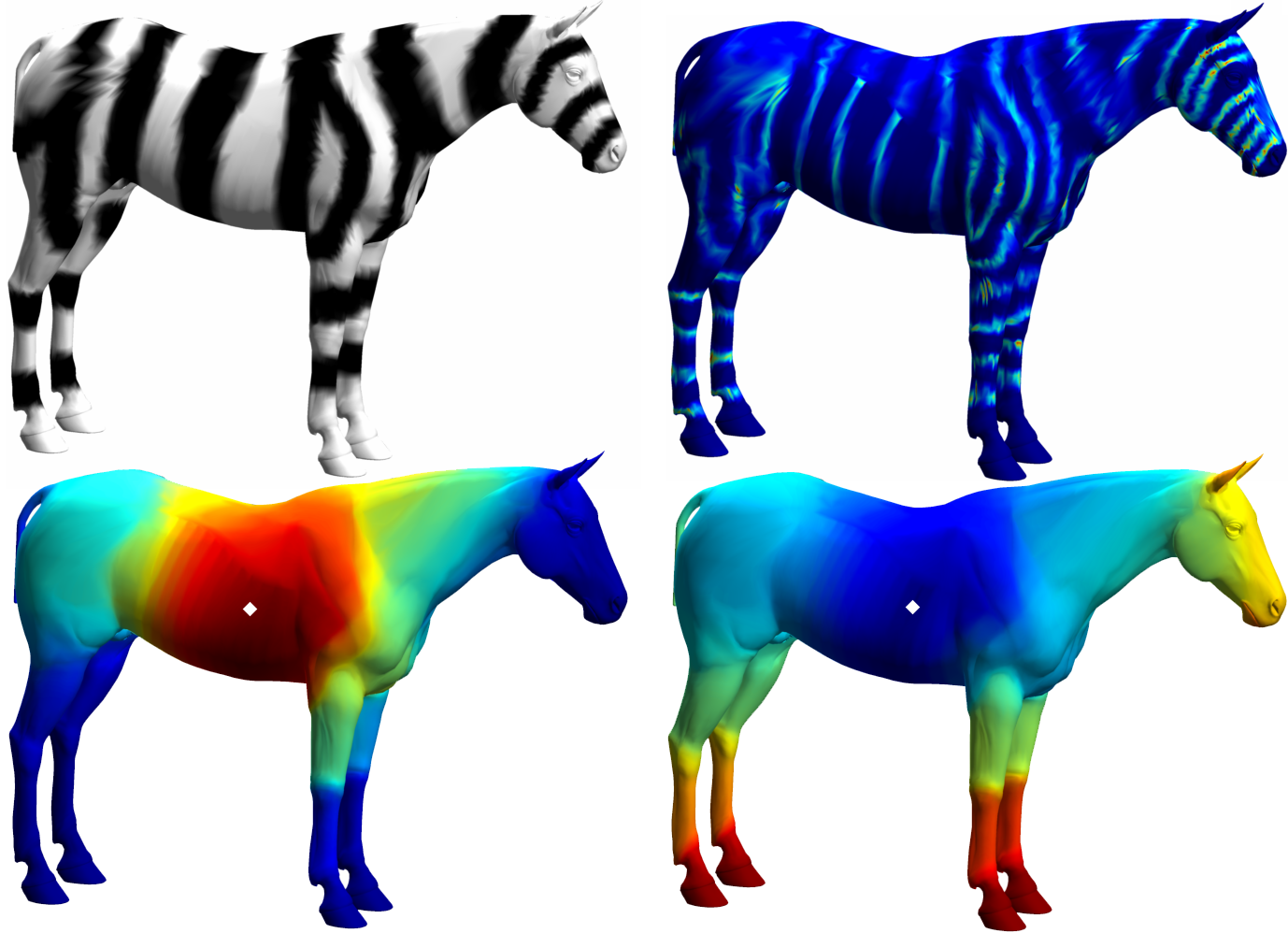}\end{center}
%\vspace{-15pt}
\caption{Diffusion on a textured shape, with $V=\alpha(1+\beta|\nabla I|)$. Left to right, row wise: textured shape, modulus of the gradient, snapshot of kernel, diffusion distance. Observe that some of the stripes of the zebra can be seen in the distance and the kernel is shaped by them, since texture edges work against the diffusion. The source is marked in white.}
\label{fig:diffexmp}
%\vspace{-5pt}
\end{figure}

\subsection{Point Selection and histogram comparison}

To obtain a signature based on diffusion distances, one needs to select points between which the distances are computed. In our case, we performed farthest point sampling \cite{hochbaum85} based on Euclidean ambient distances (for simplicity) to pick $100$ points, for which we computed the diffusion distance map to the rest of the shape. Then, the results for each of them are combined in a global histogram, which will serve as descriptor for the shape. The histograms were quantized with $120$ bins, and normalized, to compare shapes with any number of vertices.

%\subsection{Histogram Comparison}

After obtaining the histograms of diffusion distances to be used as signatures, we need a way to compare them consistently. A popular approach for comparing probability distributions (and hence normalized histograms) is the Earth Mover's Distance, or EMD \cite{emd}. We use it to compute distances between the histograms of diffusion distances, as a means of comparing them. Let us note that even if in the one-dimensional continuous case computing the optimal cost is straightforward from the cumulative distribution functions, in the discrete case one still needs to solve a flow network optimization problem, which is computationally expensive, if done naively \cite{emd-l1}. In our case, we have used the method and code of Pele and Werman \cite{pele-emd} to efficiently approximate the EMD between our histograms.

A similar approach to shape retrieval, but with local descriptors, was used in \cite{inner-distance}, where the distances used were the inner distances inside planar shapes. A more sophisticated approach using histograms of geodesic distances and Wasserstein metrics can be found in \cite{shapes-ot}. We emphasize that we purposefully used a classification method which is far from being state of the art, to be able to better demonstrate the increased performance with just a moderate-sized database.

\section{Experimental Results}

For our experiments, the shapes were taken from the TOSCA nonrigid shape database \cite{bronstein2008numerical}, and the textures were manually added as vertex colorings. The final database consisted of $73$ shapes, belonging to $8$ different classes. Inside each class, the shapes differed by an almost-isometric deformation (different `poses').

The Laplace-Beltrami operators were discretized through the scheme described above, before combining them with the norms of the gradient of the texture data, as described in Section 4. Diffusion distances were computed directly from the definition, using $100$ eigenvalues and eigenfunctions, which were in turn computed from the corresponding matrices of the discrete operators.

Then, the EMD between the histograms from each pair of shapes was computed, and finally the two closest matches for each shape were chosen as candidates for retrieval. This whole process was done for $V=0$ and $V=\alpha\log(1+\beta|\nabla I|)$, with $\alpha$ and $\beta$ normalization constants which control the resistance to diffusion induced by the texture. 

Results for the distances are shown in Figure \ref{fig:distmatrix}, and Table \ref{table:results} shows the amount of correct matches and averages of distances inside and outside the classes, after normalization with respect to the maximum distance between elements.

As we can see, the average distance between shapes in the same class is reduced, while the average distance between shapes in different classes increases, providing better separation between the classes. Also, some queries that would produce incorrect matches with just the geometry are correctly matched when also using the texture.

\begin{figure}[ht]
\begin{center}\includegraphics[width=1.0\linewidth]{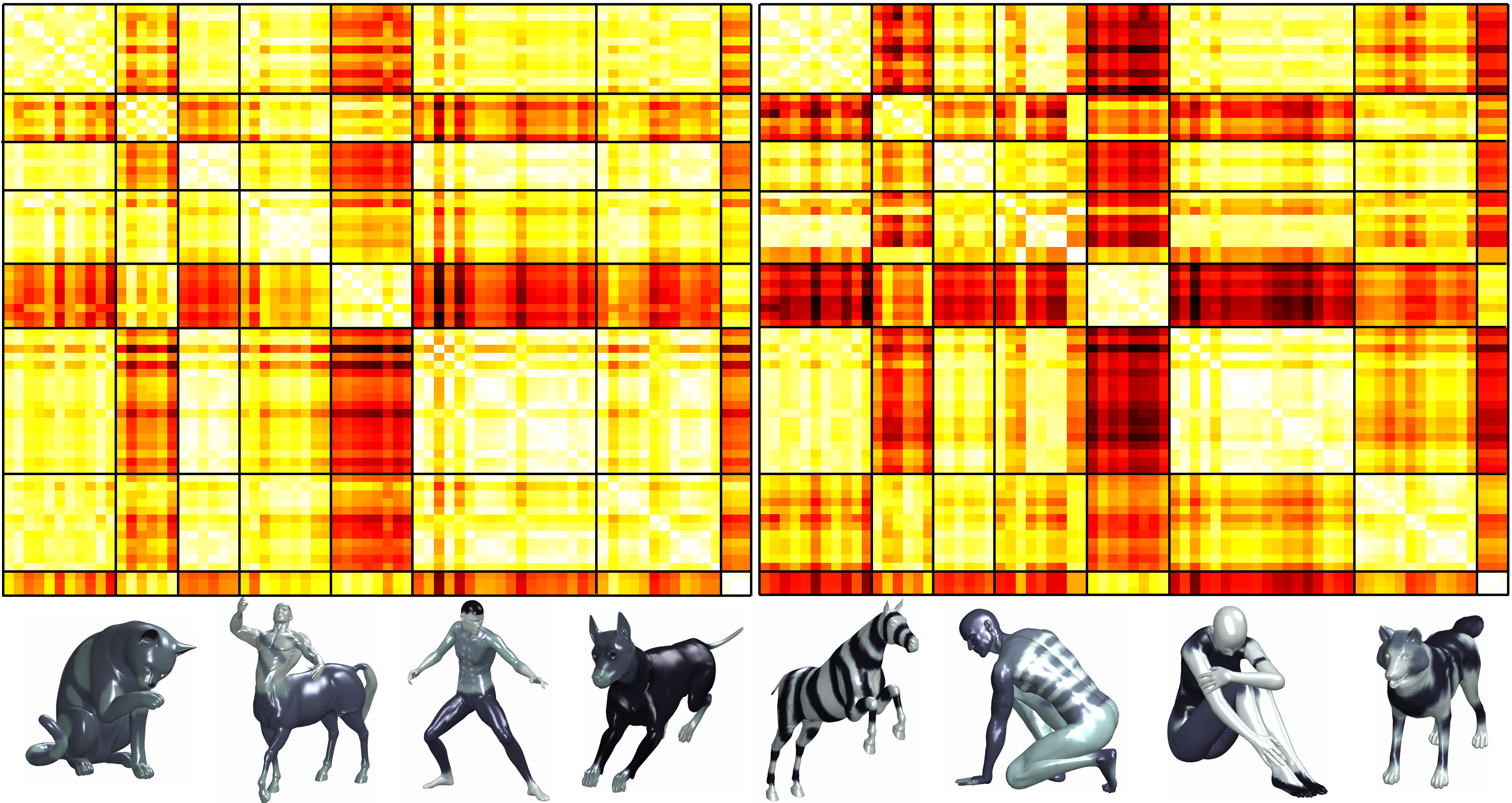}\end{center}
\vspace{-15pt}
\caption{Distance matrices between the signatures of the shapes. Lines indicate separation between classes. White corresponds to zero distance, black to maximum distance. Left: Results without texture. Right: Results with texture. Lower: Representatives of each class of shapes, in the same order as the matrices.}\label{fig:distmatrix}
\vspace{-20pt}
\end{figure}

\begin{table}[htb]
		\begin{center}	
		\caption{Numerical results for both cases}\label{table:results}	
		\vspace{-5pt}
    \begin{tabular}{ | p{7.9cm} | c | c |}
    \hline
     & $V=0$ & $V=\alpha\log(1+\beta|\nabla I|)$ \\ \hline
    Nearest shape belongs to correct class     & 52/73 & 65/73\\ \hline
		Second nearest shape belongs to correct class    & 41/73 & 50/73\\ \hline
		Normalized avg. distance for shapes in same class & 0.1440 & 0.1167\\ \hline
		Normalized avg. distance for shapes in different classes & 0.2905 & 0.3822\\ \hline				    		
    \end{tabular}						
		\vspace{-30pt}
		\end{center}
\end{table}

\section{Conclusions}

We defined a family of diffusion distances based on the diffusion associated to Schr\"{o}dinger operators, for use in triangulated shapes with textures, where quantities derived from the texture are introduced as the potential part of the operator. These are at least continuous with respect to perturbations of the texture. 

The practical usefulness of our method was illustrated by a simple retrieval example using global diffusion distance histograms as descriptors. Using the available texture information resulted in better performance than using only the geometry data through standard heat diffusion. 

Our approach could also be useful to incorporate texture data in other methods of shape analysis using Laplacian operators but not heat diffusion explicitly, such as the ones in \cite{wks} and \cite{gps}.

\subsubsection*{Acknowledgments.}
This work was supported by the European Commission ITN-FIRST, agreement No. PITN-GA-2009-238702.

\bibliographystyle{splncs}
\bibliography{schrodinger-cameraready}

\appendix
\section{Proof of Proposition 2}

We want to apply \cite{kato} IX.2.16, which tells us that for strong convergence of a semigroup $e^{tT_\epsilon}\rightarrow e^{tT}$ as $\epsilon \rightarrow 0$, it is enough that for some $\xi>0$ larger than the exponential growth of the semigroup, the corresponding resolvents $(T_\epsilon- \xi I)^{-1}$ converge strongly. In this case, since our operators $T=\Delta_g-V$ ($V\in L^{\infty}(M,\mu_g)$) are m-accretive (the semigroup is contractive), any $\xi>0$ will suffice. Choose $f \in L^2(M, \mu_g)$ with $\|f\|_{L^2}=1$, $\xi > \|V\|_{L^{\infty}} + \|N\|_{L^{\infty}}$ such that $\xi$ is not an eigenvalue of $-\Delta_g+V$, and consider the equations
\begin{equation}\begin{split}\label{epsprob}-\Delta_g u_\epsilon + (V+\epsilon N)u_\epsilon + \xi u_\epsilon = f\\
                            -\Delta_g w  + Vw + \xi w = f.\end{split}\end{equation}
Note, that by the Fredholm alternative (\cite{RR}, 8.95)	and the fact that the spectrum of $-\Delta_g+(V+\epsilon N)$ is nonnegative, there is a unique solution for both equations. We can also assume $\epsilon<1$. Multiplying the first equation of (\ref{epsprob}) by $u_\epsilon$ and integrating, we obtain
\begin{align*}\int_M fu_\epsilon d\mu_g&=-\int_M u_\epsilon \Delta_g u_\epsilon  d\mu_g +\int_M (\xi+\epsilon N+V)u_\epsilon^2 d\mu_g=\\
&=\int_M \langle \nabla_g u_\epsilon, \nabla_g u_\epsilon\rangle d\mu_g+\int_M (\xi+\epsilon N+V)u_\epsilon^2 d\mu_g,\end{align*}
so that
\begin{equation}\|u_\epsilon\|^2_{L^2} \leq  \frac{\|f\|_{L^2}\|u_\epsilon\|_{L^2}}{\xi-\epsilon\|N\|_{L^{\infty}}-\|V\|_{L^{\infty}}}\leq \frac{\|u_\epsilon\|_{L^2}}{\xi-\|N\|_{L^{\infty}}-\|V\|_{L^{\infty}}},\end{equation}
and hence $u_\epsilon$ is bounded in $L^2(M, \mu_g)$ independently of $\epsilon$. Without loss of generality we can consider $\epsilon=1/n$, and by elliptic regularity, we can take a subsequence $n_k$ such that $u_{n_k}$ converges weakly to some $u_0 \in H^2(M)$. Now substract both equations in (\ref{epsprob}), multiply by an arbitrary h in $H^2(M)$ and get, using the symmetry of $\Delta_g$
\begin{equation}\label{lim}
  -\int_M (w-u_\epsilon) \Delta_gh d\mu_g+\int_M (\xi+V)(w-u_\epsilon)h d\mu_g=\epsilon \int_M N u_\epsilon hd\mu_g.
  \end{equation}
Notice that $\int_M N u_\epsilon h d\mu_g \leq \|N\|_{L^{\infty}}\|u_\epsilon\|_{L^{2}} \|h\|_{L^{2}}$, so that the right hand side of (\ref{lim}) tends to zero. Since $h$ was arbitrary and $H^2(M, \mu_g)$ is dense in $L^2(M, \mu_g)$ ($M$ being compact), we end up with (understood in weak $L^2$ sense)
\begin{equation}-\Delta_g(w-u_0)+(\xi+V)(w-u_0)=0.\end{equation}
But since $\xi>0$ and the spectrum of $-\Delta_g+(\xi + V)$ is nonnegative, the unique solution is $w-u_0=0$, hence $w=u_0$. The fact that none of this depends the selection of $f$ implies convergence in the operator norm. The statement about the diffusion distances is now clear from their definition and the fact that the kernels are smooth.

\end{document}